\renewcommand*{\backref}[1]{}
\renewcommand*{\backrefalt}[4]{%
    \ifcase #1 {}%
    \or        {\footnotesize[#2]}%
    \else      {\footnotesize[#2]}%
    \fi}
\newcommand{\R}{\mathbbm{R}}
\newcommand{\Ind}{\mathbbm{1}}
\renewcommand{\sectionautorefname}{Section}
\DeclareMathOperator{\E}{E}
\DeclareMathOperator{\Var}{Var}
\DeclareMathOperator{\Cov}{Cov}
\DeclareMathOperator{\SD}{SD}
\DeclareMathOperator*{\argmax}{arg\,max}
\newcommand{\minsize}{\nu}
\newcommand{\0}{\mathbf{0}}
\newcommand{\I}{\mathbbm{I}}
\newcommand{\Samp}{\mathcal{S}}
\newcommand{\X}{\mathcal{X}}
\newtheorem{proposition}{Proposition}
\newtheorem*{lemma*}{Lemma}
\title{\Large
Personalized Assignment to One of Many Treatment Arms
\\ via Regularized and Clustered Joint Assignment Forests
}
\author{
    Rahul Ladhania \\ University of Michigan
    \And Jann Spiess \\ Stanford University
    \AND 
    Lyle Ungar \\ University of Pennsylvania
    \And Wenbo Wu \\ New York University
}
\date{October 31, 2023}
\renewcommand{\@noticestring}{
\hrule
\vspace{0.5em}
Rahul Ladhania, School of Public Health, University of Michigan, \href{mailto:ladhania@umich.edu}{ladhania@umich.edu}. Jann Spiess, Graduate School of Business, Stanford University, \href{mailto:jspiess@stanford.edu}{jspiess@stanford.edu}. Lyle Ungar, Computer and Information Science, University of Pennsylvania, \href{mailto:ungar@cis.upenn.edu}{ungar@cis.upenn.edu}.
Wenbo Wu, Grossman School of Medicine and Center for Data Science, New York University, \href{mailto:wenbo.wu@med.nyu.edu}{wenbo.wu@med.nyu.edu}.
We thankfully acknowledge support from Schmidt Futures through the ``Advance the Science of Behavior Change through Machine Learning'' project with the \hyperlink{https://bcfg.wharton.upenn.edu}{Behavior Change For Good Initiative} at the University of Pennsylvania.
For comments and discussions, we thank Nathan Kallus, Katy Milkman, Sendhil Mullainathan, and Stefan Wager.}
\begin{document}

\begin{titlepage}

\maketitle

\vspace{-2em}
\begin{center}
    October 31, 2023 \\ \: \\ \textit{Comments welcome.}
\end{center}
\vspace{1em}

\begin{abstract}
		We consider learning personalized assignments to one of many treatment arms from a randomized controlled trial.
		Standard methods that estimate heterogeneous treatment effects separately for each arm may perform poorly in this case due to excess variance.
		We instead propose methods that pool information across treatment arms:
	    First, we consider a regularized forest-based assignment algorithm based on greedy recursive partitioning that shrinks effect estimates across arms.
	    Second, we augment our algorithm by a clustering scheme that combines treatment arms with consistently similar outcomes.
	    In a simulation study, we compare the performance of these approaches to predicting arm-wise outcomes separately, and document gains of directly optimizing the treatment assignment with regularization and clustering.
	    In a theoretical model, we illustrate how a high number of treatment arms makes finding the best arm hard, while we can  achieve sizable utility gains from personalization by regularized optimization.
	\end{abstract}

\end{titlepage}

\section{Introduction}

We tackle the problem of learning an assignment policy that maps individual characteristics to one of potentially many treatment arms based on data from a randomized controlled trial. 
We consider a class of regularized forest-based algorithms that directly optimize for the outcome under assignment,
propose a modification that also clusters treatment arms, document the performance of these approaches in a simulation exercise, and discuss extensions. Additionally, we show how having many treatment arms limits the prospects for best-arm identification and the performance of assignments based on separate arm-wise predictions, 
while achieving gains from better assignments remains feasible by optimizing for them directly.

Controlled trials in which treatment has been randomized across arms not only allow for the estimation of average effects, but they also provide an opportunity to learn which treatment works best for whom.
A growing literature brings together methods from machine learning and causal inference to estimate heterogeneous treatment effects with flexible function forms \cite[e.g.][]{athey2016recursive,athey2019generalized,wager2017estimation} and by leveraging such information to optimize the assignment of treatments to individuals \cite[such as][]{athey2020policy,Hitsch2018-bw,Sverdrup2020-ze}.
Other approaches directly optimize the utility of assignment of individuals to treatment \cite[such as][]{Kallus2017-pr,Kitagawa2018-sp} or the probability of assigning individuals to the best arm \cite[e.g.][]{murphy2005generalization,zhou2018sequential}.
Yet many of these procedures focus on the case of a single treatment and control group \cite[with some recent exceptions, such as the multi-arm settings considered in][]{zhou2018sequential,Sverdrup2020-ze,nie2021quasi,zhou2022offline,malearning}.

In a number of clinical and behavior science settings, there might be a large suite of candidate interventions being tested for efficacy.
Examples include psychological theory-informed nudges aimed at promoting gym visits or vaccine uptakes \citep{milkman2021mega, milkman2021megastudy} or anti-depressants being considered for treatment of major depression \citep{ogawa2018efficacy}.
In such cases, existing methods that focus on settings with only a few treatment arms may not adapt well to the many-arms setting.
For example, calculating personalized policies based on separate arm-wise treatment effect estimation may yield excessive variance in estimates and assignments.

In order to estimate personalized assignments with many treatment arms from a randomized trial,
we focus on directly optimizing assignments using data across all arms jointly.
Specifically, we consider a tree-based construction following the ``personalization forest'' proposed by \citet{Kallus2017-pr} that pools information from all treatment arms and directly optimizes for the utility achieved under assignment \citep[similar to the empirical welfare maximization of][]{Kitagawa2018-sp}, in contrast to approaches that select models for the estimation of treatment effects and/or estimate outcomes of different arms separately.
Like \citet{athey2019generalized}, we estimate personalized treatment effects from the combination of trees in an honest way following \citet{athey2016recursive}. We then leverage honest estimates from the training data to obtain an assignment rule.

To achieve better assignments for a large number of treatment arms,
we augment our assignment forest in two ways.
First, we propose a within-leaf regularization scheme that shrinks estimates towards leaf-wise averages.
Second, we cluster treatment arms into groups based on the similarity of their estimated outcomes across units in the training sample.
We then grow a regularized forest based on grouped treatment arms before recovering personalized assignments to one of the original treatment arms in a last step.
In both cases, we pool information across arms, thus reducing the variance relative to estimating all arms separately. 
In its clustering version, we believe that our setup and approach is most similar to recent work in \citet{malearning}, which proposes a supervised clustering approach via adaptive fusion in a parametric treatment-effect model, while we focus on unsupervised clustering in combination with a non-parametric regularized, honest, and jointly estimated random forest.
Our approach also relates to \citet{banerjee2021selecting} that pools treatments in order to select optimal nudges.

We compare the resulting algorithms to natural reference approaches in a simulation study. 
We show that direct optimization and regularization already lead to large improvements relative to benchmark methods that predict arm-wise outcomes separately, and to modest improvements over methods based on estimating separate heterogeneous treatment effects for all treatment arms relative to the control. We also document that adding clustering can improve assignment rules considerably even relative to the best non-clustered alternatives in our comparison group.

Our approach is motivated by the observation that a large number of arms makes it infeasible to consistently find the best arm and renders assignments based on separate arm-wise estimation inefficient.
We make this claim precise in a theoretical illustration with many treatment arms where we show that best-arm identification becomes hard and assignment policies based on separate prediction of outcomes can perform poorly relative to joint assignment.

We introduce our setup in \autoref{sect:setup} and describe the proposed algorithms in \autoref{sect:treeforest}.
\autoref{sect:simulation} describes our simulation experiment and its results.
We lay out our theoretical illustration in \autoref{sect:feasibility}.
In \autoref{sect:extensions}, we discuss extensions to more complex regularization and clustering schemes, as well as to non-experimental data.
We conclude in \autoref{sect:conclusion} by summarizing our findings and discussing important limitations and open questions for future research.

\section{Setup and Goal}
\label{sect:setup}

We consider data from a randomized experiment with $K+1$ treatment arms, and aim to estimate an assignment that maximizes out-of-sample outcomes for the same distribution of potential outcomes.

For treatments $k \in \{0,1,\ldots,K\}$, where we typically identify $k=0$ with the control arm, we let $Y^k \in \R$ be the response of a given unit when assigned to treatment $k \in \{0,1,\ldots,K\}$. We also assume that there are features $X \in \X$ available that are not affected by treatment.
Our goal is to find an assignment $a: \X \rightarrow \{0,1,\ldots,K\}$ such that the expected response
\begin{equation*}
    U(a) = \mathrm{E}[Y^{a(X)}]
\end{equation*}
on new data drawn from the distribution of $(Y^0,\ldots,Y^K,X)$ is maximal, for the given assignment function $a$. (Hence, we assume that the potential outcomes $Y^k$ are in utility units.)

To find an assignment $a$, we assume that we have $n$ iid samples from an experiment available.
In the experimental data, for each observation $i \in \Samp = \{1,\ldots,n\}$ treatment $T_i \in \{0,1,\ldots,K\}$ was randomly assigned  independently of covariates (with for now fixed propensity scores $p^k, k \in \{0,1,\ldots,K\}$), and we observe data
\begin{align*}
    &(Y_i,T_i,X_i)_{i=1}^n,
    &
    Y_i &= Y^{T_i}_i.
\end{align*}
From this data our goal is to estimate an assignment policy $a: \X \rightarrow \{0,1,\ldots,K\}$ that maximizes $U(a)$,
where the optimal (but generally infeasible) assignment policy is given by $a^*(X) = \argmax_a \E[Y^a|X]$.

\section{Regularized Joint Assignment Tree and Forest}
\label{sect:treeforest}

We provide an algorithm that obtains an assignment $\hat{a}: \mathcal{X} \rightarrow \{0,1,\ldots,K\}$ for $\mathcal{X} = \R^d$.
One natural approach to constructing such an algorithm would be to estimate arm-wise outcomes $\E[Y^a|X]$ or treatment effects $\E[Y^a - Y^0|X]$ separately, and then assign an individual with characteristics $X$ to the arm with the highest estimated outcome or treatment effect.
However, such an approach would yield excess variance, since each separate estimation would only use limited data.
In addition, an algorithm optimized for the precise estimation of arm-wise outcomes or arm-wise treatments effects may not be suboptimal for the related, but different goal of finding an assignment that yields high utility.

Instead, we consider an approach that optimizes directly for an optimal assignment across all treatment arms and thereby pools all data.
Specifically, from the training data we obtain joint assignment trees through recursive partitioning, which we then combine into a single joint assignment forest through bagging.
Relative to the construction of the personalized forests from \citet{Kallus2017-pr}, our implementation adapts honest estimation from \citet{athey2019generalized} and specifically targets the challenge of many treatment arms by integrating regularization and clustering in the construction of trees and honest estimation of leaf-wise treatment effects.

\subsection{Regularized Joint Assignment Tree}

For a given training sample of size $n$, we follow \citet{Kallus2017-pr} in fitting a tree by recursively maximizing the regularized empirical analogue of the utility $U(\hat{a})$ for an assignment $\hat{a}$ determined by leaf-wise maximizers. 

\paragraph{Regularized leaf-wise estimation and assignment for a given tree.}

Given a tree that partitions the covariate space $\R^d$ into leaves, we calculate an assignment $a_\ell$ for leaf $\ell$ as the maximizer of the regularized arm-wise within-leaf averages $\hat{Y}^k_\ell$.
Specifically, we write
\begin{align*}
    \overline{Y}^k_{\ell} &= \frac{1}{N^k_{\ell}} \sum_{X_i \in \ell, T_i=k}  Y_i,
    &
    N^k_{\ell} &= \sum_{X_i \in \ell, T_i=k} 1,
\end{align*}
for the arm-wise average outcomes and counts within a leaf in the training data.
Here, $\overline{Y}^k_{\ell}$ is an unbiased estimator of $\mathrm{E}[Y^k|X \in \ell]$.

The arm-wise averages may be noisy, especially for those arms that have only a few observations in a given leaf.
In a departure from the existing literature, we shrink arm-wise average outcomes $\overline{Y}^k_{\ell}$ towards the overall (weighted) average $\overline{Y}_\ell$ of leaf outcomes (which can be motivated by a homoscedastic Normal means model similar to \autoref{sect:feasibility}) and set
\begin{align*}
    \hat{Y}^k_\ell &= \frac{
        N^k_{\ell} \overline{Y}^k_{\ell} + \lambda_1 \overline{Y}_{\ell}
    } {N^k_{\ell} + \lambda_1}
    &
    &\text{where}
    &
    \overline{Y}_{\ell} &= \frac{\sum_{k=0}^K \frac{N^k_{\ell} \overline{Y}^k_{\ell}}{N^k_{\ell} + \lambda_1} }{\sum_{k=0}^K \frac{N^k_{\ell} }{N^k_{\ell} + \lambda_1}},
\end{align*}
and then choose the assignment $a_\ell = \argmax \hat{Y}^k_\ell$.
This regularization scheme reduces the probability that arms with a few draws that are spuriously high are chosen over arms with a high number of draws and a high true average.
The regularization parameter $\lambda_1$ controls the amount of shrinkage; when $\lambda_1 = 0$, $\hat{Y}^k_\ell = \overline{Y}^k_{\ell}$, and we directly maximize the empirical outcome.
We briefly discuss extensions that embrace heteroscedasticity and shrinking towards overall arm-wise averages in \autoref{sect:extensions} below.

\paragraph{Recursive splitting criterion.}
\label{subsubsect:split}
For the leaf-wise assignments $a_\ell$, we recursively split a leaf $\ell_p$ into $\ell_p = \ell_1 \cup \ell_2$ along splits $x_j \leq c$ by maximizing
\begin{equation*}
    \hat{U}_{\ell_1} + \hat{U}_{\ell_2}
\end{equation*}
for $\hat{U}$ one of the estimators
\begin{align*}
    \hat{U}_\ell^N &= N_{\ell} \hat{Y}^{a_{\ell}}_{\ell}
    &
    &\text{ or }
    &
    \hat{U}_\ell^P &= \frac{N^{a_{\ell}}_\ell}{P^{a_{\ell}}} \hat{Y}^{a_{\ell}}_{\ell}
\end{align*}
of the utility achieved by leaf-wise assignment, where
$
    P^k = \frac{N^k}{N} = \frac{\sum_{T_i=k} 1}{N}
$ and $N_\ell = \sum_{X_i \in \ell} 1$.
In deciding on a split, we consider only splits for which:
\begin{enumerate}
    \item there are at least $\minsize$ number of units in the child nodes $\ell_1, \ell_2$;
    \item the increase in utility is at least $\varepsilon \SD(Y)$,
        \begin{equation*}
            \hat{U}_{\ell_1} + \hat{U}_{\ell_2} \geq \hat{U}_{\ell_p} +\varepsilon \SD(Y),
        \end{equation*}
        where $\SD(Y)$ is the empirical standard deviation of the overall outcome variable;
    \item the two child nodes have different optimal treatment assignments, $a_{\ell_1} \neq a_{\ell_2}$.
\end{enumerate}
If no such splits exist, we do not split the leaf $\ell_p$.
This greedy algorithm yields a partition of $\R^d$.

\subsection{Regularized Joint Assignment Forest}
\label{sect:forestaggregation}

We obtain an assignment forest by bagging many trees as in \citet{Kallus2017-pr}, and estimating  honest estimates of the treatment-specific counterfactual outcomes  on the training sample following \citet{wager2018estimation}.

\paragraph{Bagging with treatment-arm randomization.}
Instead of estimating a single tree, we obtain $M$ trees from the training sample by repeatedly drawing a bootstrap sample of size $\lceil\beta n\rceil$, without replacement, for $\beta \in (0,1)$ and repeating the above procedure on the chosen sample.
We stratify each bootstrap sample by treatment arms to ensure that each draw represents the overall fraction of different treatments.
When fitting the trees, we also at every split:
\begin{itemize}
    \item Randomly choose $d'=\delta d$ of the $d$ covariates to consider for the splits, for some $\delta \in (0,1]$;
    \item Randomly choose $K'=\kappa K$ of the $K$ treatment arms to consider for assignment in the child leaves, where we calculate the gain in utility relative to an optimal assignment to one of the chosen arms in the parent leaf, for some $\kappa \in (0,1]$.
\end{itemize}

\paragraph{Honest estimation.}
For every point $x \in \R^d$ and every tree $m \in \{1,\ldots,M\}$,
we follow \citet{athey2019generalized} in obtaining honest estimates of the treatment-specific conditional potential outcome $E[Y^k|X]$, where we estimate expected outcomes only from data that the same tree was not fit on. 
Specifically, denote by $\Samp_m$ the bootstrap sample $m$ was fit on, and write $\ell_m(X) \subseteq \R^d$ for the leaf  that $x$ falls into.
Then we let for all arms $k \in \{0,\ldots,K\}$
\begin{align*}
    \bar{f}^k_m(X) &= \frac{\sum_{i \in H_m^k(X)} Y_i}{n^k_m(X)},
    &
    n^k_m(X) &= |H_m^k(X)|,
    &
    H_m^k(X) &= \{i \in \Samp \setminus \Samp_m;  X_i \in \ell_m(X), T_i = k \}
\end{align*}
and set
\begin{align*}
    \hat{f}^k_m(X) &= \frac{
        n^k_m(X) \bar{f}^k_m(X) + \lambda_2 \bar{f}_m(X)
    } {n^k_m(X) + \lambda_2}
    &
    &\text{where}
    &
    \bar{f}_m(X) &= \frac{\sum_{k=0}^K \frac{n^k_m(X) \bar{f}^k_m(X)}{n^k_m(X) + \lambda_2} }{\sum_{k=0}^K \frac{n^k_m(X) }{n^k_m(X) + \lambda_2}}.
\end{align*}
Here, we allow the shrinkage parameter to differ between the construction of individual trees ($\lambda_1$) and the final, honest estimates of arm-specific outcomes ($\lambda_2$).
This distinction allows us, for example, setting a lower $\lambda_2$ to avoid over-smoothing in constructing these final estimates (by choosing a lower $\lambda_2$), which are averaged over a large number of trees, while also choosing a higher $\lambda_1$  to avoid overfitting in the construction of individual trees.

\paragraph{Aggregation and assignment.}
\label{subsubsect:aggregation}

Given tree-wise honest estimates $\hat{f}^k_m(X)$ at a new sample point $x \in \R^d$, we estimate the conditional potential outcome $E[Y^k|X]$ by the average
\begin{equation}
\label{eqn:armwisepredictions}
    \hat{f}^k(X) = \frac{\sum_{m=1}^M \hat{f}^k_m(X)}{M}.
\end{equation}
We then obtain the assignment
\begin{equation}
    \label{eqn:armassignment}
    \hat{a}(X) = \argmax_k \hat{f}^k(X).
\end{equation}

\paragraph{Tuning parameters.} The parameters $\minsize$ (minimal leaf size), $\lambda_1$ (within-leaf shrinkage when growing the tree), $\lambda_2$ (within-leaf shrinkage when estimating), $\varepsilon$ (minimal gain in objective), $\beta$ (fraction sampled for each tree), $\delta$ (fraction of covariates considered at each split), $\kappa$ (fraction of treatments considered at each split) are the tuning parameters of the assignment forest that control the complexity of the procedure, which can be chosen by cross-validation. The way of calculating $\hat{U}$ ($N$ vs $P$) can be seen as another tuning choice.

\subsection{Reducing Baseline Variation}

Some of the variation in outcomes is common across treatment arms. To the degree that this common variation can be predicted, we can reduce the variance in the evaluation of different treatment arms by subtracting such common variation.
We therefore add a pre-processing step to our algorithm.

\paragraph{Arbitrary residualization.}
\label{subsect:residualization}
Assume we had some fixed function $\bar{f}: \R^d \rightarrow \R$ available. Then ranking between assignments for potential outcomes $\tilde{Y}^k = Y^k - \bar{f}(X)$ are the same as in the original distribution, since
\begin{equation*}
    \tilde{U}(a) = \mathrm{E}[\tilde{Y}^{a(X)}] = \mathrm{E}[Y^{a(X)} - \bar{f}(X)]= \mathrm{E}[Y^{a(X)}] - \mathrm{E}[\bar{f}(X)] = U(a) - \mathrm{E}[\bar{f}(X)],
\end{equation*}
where $\mathrm{E}[\bar{f}(X)]$ does not vary with $a$.
We can therefore estimate an assignment on data $(Y_i - \bar{f}(X_i),T_i,X_i)_{i=1}^n$, where we choose $\bar{f}$ to reduce the variance of the outcome.

\paragraph{Choices of baseline $\bar{f}$.}
We consider three (oracle) choices for a function $\bar{f}: \R^d \rightarrow \R$ to reduce variation in estimating differential assignments:
\begin{enumerate}
    \item The raw average $\bar{f}(X) = \mathrm{E}[Y|X]$. We can estimate this average by regressing $Y$ on $X$ without regard for $W$.
    \item The control baseline $\bar{f}(X) = \mathrm{E}[Y^0|X] = \mathrm{E}[Y|T=0, X]$. We can estimate this average by regression $Y$ on $X$ among those in the control group ($T=0$), if  a designated control group exists. While somewhat arbitrary, we may motivate this choice by a desire to learn first and foremost which units should be assigned to control vs other arms.
    \item The weighted average
    \begin{equation*}
        \bar{f}(X) = \frac{\mathrm{E}\left[Y / (p^{T})^2 \middle|X\right]}{\mathrm{E}[1/(p^{T})^2]}
        =
        \frac{\sum_{k=0}^K \mathrm{E}\left[Y\middle|T=k,X\right] / p^k}{\sum_{k=0}^K 1/p^k}
    \end{equation*}
    that takes into account that outcomes assigned to treatment $T=k$ get weighted by empirical analogues of the inverse propensity score $1/{p^k}$ when constructing leaf-wise averages, generalizing the approach of \citet{Wu:2017wj} (see \autoref{sect:weighting} for details).%
    \end{enumerate}

\paragraph{Implementation.}
Following e.g. \citet{Wager:2016dz,Wu:2017wj}, we fit baseline prediction functions $\widehat{f}_{-i}$ that estimate $\bar{f}$ in the training dataset using cross-fitting to avoid biases from overfitting.
We then run the forest algorithm on the residualized outcomes $Y_i - \widehat{f}_{-i}(X_i)$ where $\widehat{f}_{-i}$ does not use data from observation $i$.
As our main implementation, we solve the weighted prediction problem $$\mathrm{E}\left[
    \left(
        Y - f(X)
    \right)^2 / (p^T)^2
\right]
\rightarrow \min_{f}$$
\citep[which follows][]{spiessoptimal}
using a random forest, which implements the third option (see \autoref{sect:weighting}).
We do not residualize on the held-out data, since residualization shifts the absolute policy value of the assignment, which may be of interest. Our algorithm is summarized as \autoref{alg:dof}.

\begin{algorithm}
\caption{Regularized Joint Assignment Forest with Clustering Option}
\label{alg:dof}
\begin{itemize}[label={}, leftmargin=0.5cm]
\item For a given training sample $S$ of size $n$:
\item \textbf{1. Pre-processing}
\vspace{-.2cm}
    \begin{itemize}[label={}, leftmargin=0.5cm]
    \item Reduce baseline variation in the data by residualization (Section 3.3)
    \end{itemize}
\item \textbf{2. Regularized Joint Assignment Forest}
\vspace{-.2cm}
    \begin{itemize}[label={}, leftmargin=0.5cm]
    \item For every tree $m \in \{1, . . . , M \}$, we perform bagging with treatment-arm and covariate randomization
    \item \textbf{Regularized Joint Assignment Tree}
        \begin{itemize}[label={}, leftmargin=0.3cm]
        \item (a) On one split of the training data, perform regularized leaf-wise estimation and assignment, with recursive splitting criterion specified in Section 3.1
        \item (b) Estimate regularized “honest” outcome averages on the other split
        \end{itemize}
    \item Aggregate estimates from each of the $M$ trees and learn assignment rule as in Section 3.2
    \end{itemize}
\item \textbf{3. Clustering of Treatment Arms}
\vspace{-.2cm}
    \begin{itemize}[label={}, leftmargin=0.5cm]
    \item If clustering, perform Step 2 using an $F$-fold approach to fit assignment forests and estimate $K + 1$ outcomes for each of the $n$ units in training sample $S$ (Section 3.4)
    \item (a) Cluster $K$ vectors corresponding to non-control arms $k \geq 1$ into $M$ groups to obtain $M + 1$ “arms”
    \item (b) Repeat Step 2 on the full training data $S$ with $M + 1$ arms, and obtain regularized estimates on the original $K + 1$ arms 
    \end{itemize}
\end{itemize}
\end{algorithm}

\subsection{Clustering of Treatment Arms}
\label{sect:clustering}

While simple regularization within arms reduces variation in model construction due to random outliers, it is not able to capture systematic similarity between specific arms.
In this section, we therefore propose a simple clustering scheme that partitions the $K+1$ arms into $M+1$ groups $G$, where $\bigcup_{g \in G} g = \{0,\ldots,K\}$.
Like \citet{Bonhomme2015-tf} and \citet{Bonhomme2017-pd} in their study of unobserved heterogeneity, we employ a $k$-means algorithm for clustering the $K+1$ arms into $M+1$ groups.
Specifically, we expand the assignment forest from the previous section as follows:
\begin{enumerate}
    \item 
    Randomly split the training data into $F$ folds.
    For each fold $f \in \{1,2, \dots, F\}$, we denote the units in $f$-th fold as the on-fold sample $\Samp_f \subseteq \Samp$ and the rest of the training data units as the off-fold sample $\Samp_{- f} = \Samp \setminus \Samp_f$.
    \begin{enumerate}
        \item Fit the assignment-forest algorithm from \autoref{sect:treeforest} on the off-fold sample $\Samp_{-f}$ to obtain prediction functions $\hat{f}^k_f:\X \rightarrow \R$ of potential outcomes as in \eqref{eqn:armwisepredictions}.
        \item For each observation $i \in \Samp_f$ and arm $k \in \{0,\ldots,K\}$ in the on-fold sample, obtain fitted values $\hat{y}^k_i = \hat{f}^k_f(x_i)$.
    \end{enumerate}
    This gives us $K+1$ predictions $\hat{y}^k_i$ for each of the $n$ units in the training sample $\Samp$.
    \label{itm:preprocess}
    \item Either cluster the $K$ vectors $\hat{y}^k = (\hat{y}^k_i)_{i \in \Samp}$ corresponding to non-control arms $k \geq 1$ into $M$ groups, or
    cluster all of the $K+1$ vectors $\hat{y}^k = (\hat{y}^k_i)_{i \in \Samp}$ corresponding to arm-wise predictions into $M+1$ groups; in both cases, we obtain $M+1$ arms, where we retain the original control arm in the first case.
    \label{itm:cluster}
    \item Repeat the assignment-forest algorithm on the full training data $\Samp$ with $M+1$ arms (where data from the original arms are combined by groups) to obtain an ensemble of trees.
    \label{itm:construct}
    
    \item Obtain final predictions and assignments as in \autoref{sect:forestaggregation}, where we now go back to estimating regularized averages separately by the original treatment arms $k \in \{0,\ldots,K\}$ and obtain a corresponding assignment.
    \label{itm:assign}
\end{enumerate}

Our proposed algorithm thus uses a clustering of treatment arms when constructing the assignment trees in Step~\ref{itm:construct}, but still provides arm-specific estimates from the resulting forest in Step~\ref{itm:assign}.

We believe that our approach is most similar to the recent clustering approach of \citet{malearning}, which operates within the same framework, but considers a semi-parametric model with parametric treatment effects.
Unlike our unsupervised clustering scheme, \citet{malearning} leverages a fusion penalty term to obtain supervised clustering.

\section{Simulation Study} 
\label{sect:simulation}
    
We conduct a simulation study to examine the performance of the algorithms described above. 

\subsection{Simulation Setup}
\label{subsect:simulationdetails}

For $X \sim \mathcal{N}(\0_d,\I_d), \varepsilon \sim \mathcal{N}(0,1)$, $T \sim \mathcal{U}(\{0,1,\ldots,K\})$ with $d=3$ we generate outcomes by
\begin{align*}
    Y &= 
        10 + 20 \cdot \Ind_{X_1 > 0} - 20  \cdot \Ind_{X_2 > 0}
        - 40 \cdot \Ind_{X_1, X_2 > 0}
    \\
    &
    \phantom{=} + \gamma \cdot
    (2  \cdot \Ind_{X_3 > 0} - 1) \frac{2 \cdot T - K - 1}{K-1} \Ind_{T>0}
    \\
    &
    \phantom{=}
    - 10 \cdot X_1^2 \Ind_{T=0}
    + \sigma \cdot \varepsilon.
\end{align*}
In this $K+1$-treatment-arm setting (with choice of two parameters, the strength of treatment effects $\gamma$ and residual noise $\sigma$), the optimal treatment choices are $T=K$ (for $X_3 > 0$) and $T=1$ (otherwise), realizing an average outcome of $\gamma$. Average outcome for assigning to control is $-10$, for assigning to the global best is $0$, and for assigning randomly among one of the $K$ non-control treatments is also $0$. We run simulations for treatment effect strengths $\gamma \in \{10,20\}$ and noise level $\sigma \in \{10,20\}$  with $K \in \{9,29,49,99\}$ arms on samples of size $n = 5{,}000$.

We choose a simple setting with joint baseline and treatment-effect structure to clearly highlight the advantages of the joint assignment approach. Note, however, that treatment arms are not clustered and all have different effects on the outcome.

\subsection{Tuning Parameters}
For our three-covariate simulation setup, we tune the following parameters:  $\minsize \in \{3,5\}$ (minimal leaf size), $\lambda_1 \in \{0,0.5,1\}$ (within-leaf shrinkage when growing the tree), $\lambda_2 \in \{0,0.5,1\}$ (within-leaf shrinkage when estimating), $\varepsilon \in \{0.5,1,2\}$ (minimal gain in objective),  $\kappa \in \{0.5, 0.8, 1\}$ (fraction of treatments considered at each split), and choose the combination of these parameters for each arm-``noise'' setting by cross-validation. We use a test set of size 10,000 for arriving at our final estimates.  We note that it may make sense to use different regularization parameters across Steps~\ref{itm:preprocess}, \ref{itm:construct}, and \ref{itm:assign} in \autoref{sect:clustering}.
In our simulations, we used the same regularization parameters ($\lambda_1$ and $\lambda_2$ before and after clustering) and found that works well.

\subsection{Comparison Methods}

We compare the performance of our regularized and clustered approach to two natural approaches to determining treatment assignments.
First, we fit $K+1$ random forests separately for each control and treatment arm $k \in \{0,1,\dots, K\}$, and predict $\hat{Y}_k = \hat{f}_k(X)$ for each observation under consideration.
The assignment rule is obtained as
$\hat{a}(X) = \argmax _{k \in \{0,1,\dots, K\}} \hat{f}_{k}(X)$.
Second, we compare the performance of our approach to a multi-arm causal forest \citep{athey2019generalized,nie2021quasi} that jointly estimates treatment effects $\hat{\tau}_k(X)$ for all $K$ treatment arms $k \in \{1,\ldots,K\}$.
As an assignment rule for the multi-arm causal forest, we use
$\hat{a}(X) = \argmax_{k \in \{0,1,\dots, K\}} \hat{\tau}_{k}(X)$,
where we set $\hat{\tau}_{0}(X) = 0$.

\subsection{Simulation Results}

\autoref{fig:50treatmentsvaluefunc} presents results for the optimal value function  from 500 simulations for a 50-treatment arm design for the following three settings: a. $\gamma = 10, \sigma = 10$ (hereby referred to as regular setting), b. $\gamma = 20, \sigma = 10$ (''low noise'' setting), and c. $\gamma = 10, \sigma = 20$ (``high noise setting'').  \autoref{fig:highnoisevaluefunc} presents results for the optimal value function from 500 simulations of the ''high noise'' ($\gamma = 10, \sigma = 20$) setting for 10, 30, 50, and 100 treatment arms. We present corresponding results of the assignment rate in Figures \ref{fig:50treatmentsassignmentrate} and \ref{fig:highnoiseassignmentrate} in the Appendix.  
Here, `Oracle Optimal Assignment' refers to following the optimal assignment rule, based on the ground truth in the simulation. `Random Assignment' refers to assigning units in every simulation randomly across treatment arms; `Global Best Assignment' refers to assigning units in every simulation the on-average best performing treatment. 

We note that our joint assignment approaches (un-clustered and clustered)  outperform the separate random forests approach in recovering the oracle outcome under optimal assignment for all $\gamma,\sigma$ settings, across all considered treatment arms (10, 30, 50, 100). The clustered DOF approach outperforms the un-clustered approach in recovering the oracle optimal outcome across all settings, making a strong case for treatment arm clustering under many arms. Compared to the multi-arm causal forest approach, our clustered approach outperforms it in recovering the optimal outcome across the ``regular'' and ``high noise'' settings, while  delivering similar performance in a ``low noise'' setting (Figure \ref{fig:50treatmentsvaluefunc}). The resilience of our joint assignment approach compared to the multi-arm causal forest is particularly prominent in `high noise' settings (Figure \ref{fig:highnoisevaluefunc}).

Figures \ref{fig:50treatmentsassignmentrate} and \ref{fig:highnoiseassignmentrate} in the Appendix demonstrate the performance of our clustered and unclustered approaches against the separate forest and multi-arm causal forest on the average (across 500 simulations) of the successful assignment rate to the oracle treatment assignment rule. In settings with fewer arms, we find our approach to perform well (Figure \ref{fig:highnoiseassignmentrate}), but as the number of arms increases, both our unclustered and clustered joint assignment forest approaches perform worse than the ``global best assignment'' rule. While the algorithms perform better in ``low noise'' settings (for large arm settings), in ``high noise'' ones, it approaches assignment rates akin to random assignment \ref{fig:50treatmentsassignmentrate}). 

\begin{figure}[hbt!]
    \centering
    \includegraphics[width=\textwidth]{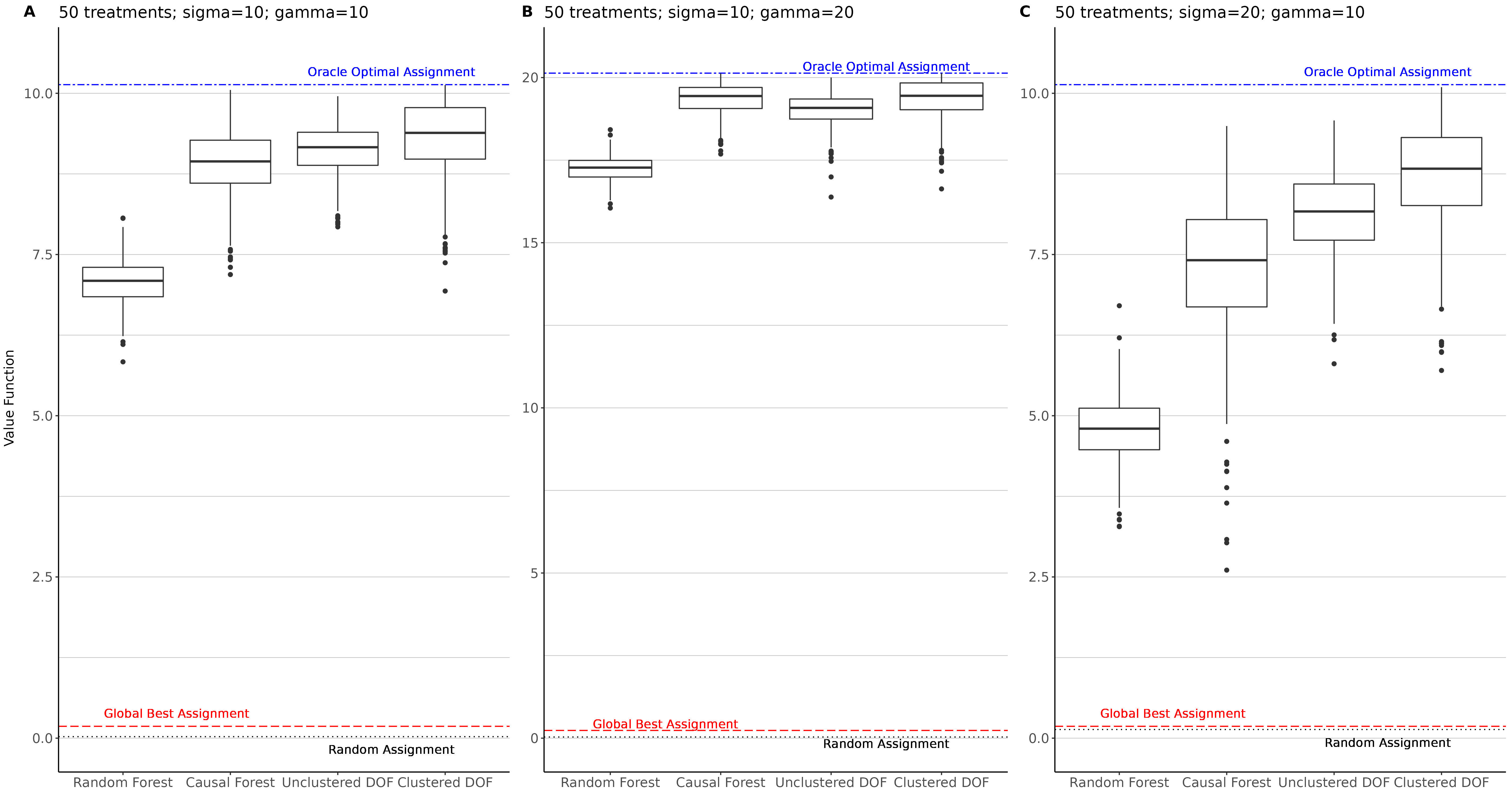}
    \caption{Boxplots of 500 simulations comparing the average out-of-sample outcome of the units under assignment rules learned from (l to r):  separate random forests, multi-arm causal forest, unclustered regularized joint assignment forest (DOF), and clustered regularized joint assignment forest for 50 treatment arms in A. $\gamma = 10, \sigma = 10$ (regular setting), B. $\gamma = 20, \sigma = 10$ (''low noise'' setting), C. $\gamma = 10, \sigma = 20$ (``high noise'' setting)}\label{fig:50treatmentsvaluefunc}
\end{figure}

\begin{figure}[hbt!]
    \centering
    \includegraphics[width=\textwidth]{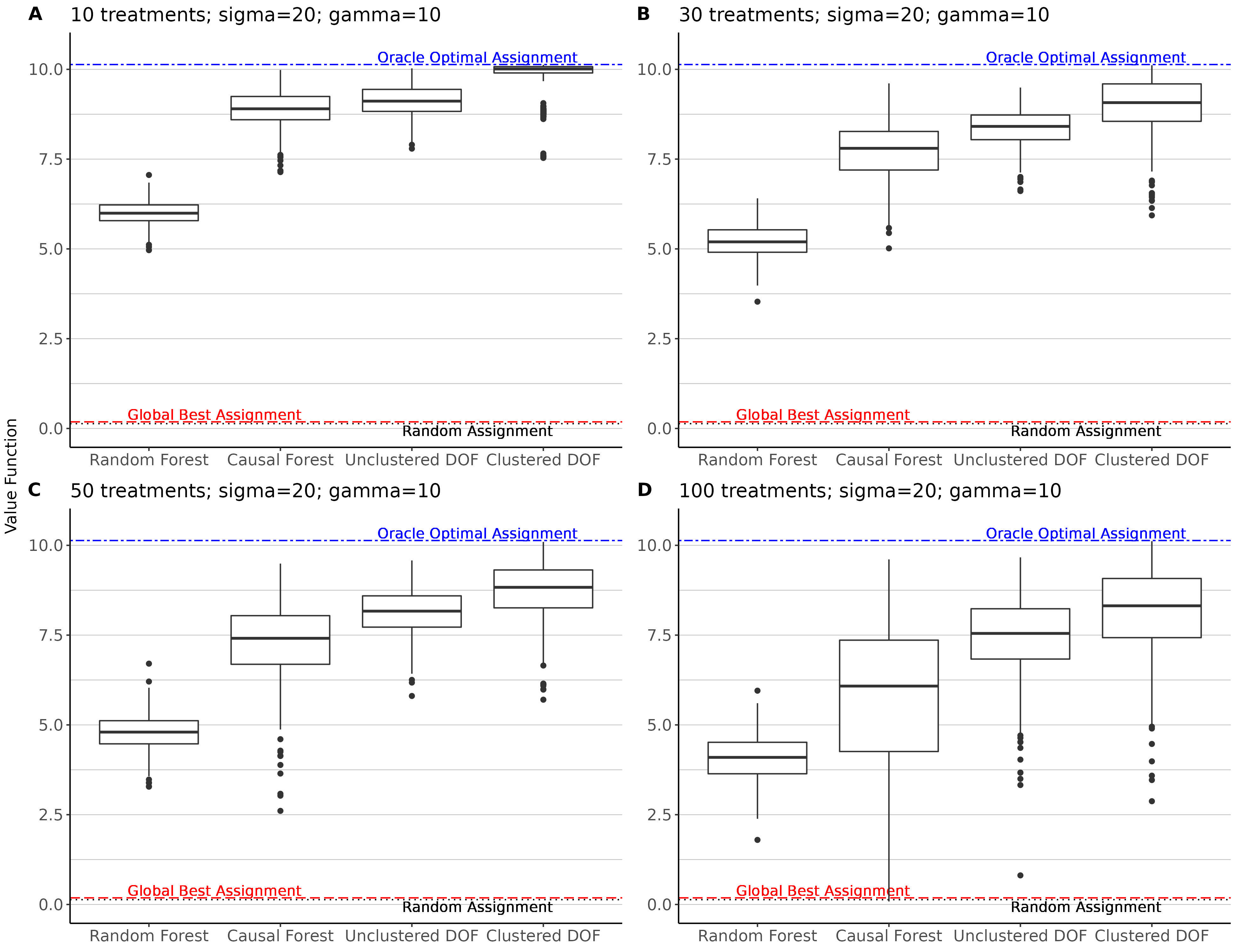}
    \caption{Boxplots of 500 simulations comparing the average out-of-sample outcome of the units under assignment rules learned from (l to r):  separate random forests, multi-arm causal forest, unclustered regularized joint assignment forest, and clustered regularized joint assignment forest in a  ''high noise'' setting ($\sigma = 20, \gamma = 10$) for A. 10, B. 30, C. 50, and D. 100 treatment arms}
    \label{fig:highnoisevaluefunc}
\end{figure} 

\FloatBarrier

\section{Illustration of Challenges in Assignments to One of Many Arms}%
\label{sect:feasibility}

One motivation for our regularized joint optimization approach is that finding optimal treatment assignments becomes hard and separate estimation inefficient when there are many treatment arms.
In this section, we theoretically illustrate these features.

Specifically, we consider different goals and procedures when assigning treatments in a small example.
One natural goal when assigning treatments is maximize the probability of the chosen arm for a randomly chosen unit being the best arm for that unit.
In \autoref{sec:bestarm}, we show in a high-dimensional Normal example that this goal of best-arm identification becomes hard when the number of arms increases, while assigning to maximize utility still can yield non-trivial solutions over the benchmark of random assignment.
We therefore focus instead on the goal of optimizing for a treatment rule with high average outcomes.

A natural procedure to achieving good treatment assignments is to predict each treatment arm's outcome separately and then assign a given unit to the treatment with the highest predicted outcome.
In \autoref{sec:separate},
we compare this method of assigning by estimating the outcomes of different treatment arms separately to optimizing directly for an optimal assignment in the same limiting regime, and show that except for edge cases the former performs strictly worse than the latter even in the limit.
Together, the illustrations in this section motivate our focus in direct utility maximization over best-arm identification and the estimation of separate arms.

\subsection{Simple High-Dimensional Normal Model}

Comparing how hard goals are to achieve and how to best achieve them faces two related hurdles: First, procedures may not only differ in their target loss function, but also in the function class, regularization, and optimization method; and second, performance depends on the true distribution and there may not be a universally best solution, even holding e.g. the function class fixed.
We overcome these challenges by a standard approach from statistical decision theory \citep{Wald1950-qc}:
we consider a distribution over the true state of the world, and compare the average performance for optimal (Bayes) solutions to each of the problems.
This way we can abstract from the specific algorithm employed for each purpose and focus instead of the different optimization goals.

For tractability, we consider a simple homoscedastic baseline model with $\X = \{1,\ldots,N\}$ and
\begin{align*}
    Y | T=k, X=j &\sim \mathcal{N}(\mu^k_j,\sigma^2).
\end{align*}
We assume that the means $\mu^k_j$ are themselves jointly distributed according to a multivariate Normal prior that is invariant to permutations in treatment arms and covariates identities, capturing the idea that these are ex-ante indistinguishable.
This allows us to write
$\mu^k_j = \alpha + \beta_j + \gamma^k + \delta^k_j$ with
\begin{align*}
    \alpha &\sim \mathcal{N}(0,a),
    &
    \beta_j &\sim \mathcal{N}(0,b),
    &
    \gamma^k &\sim \mathcal{N}(0,c),
    &
    \delta^k_j &\sim \mathcal{N}(0,d),
\end{align*}
where all draws are independent.
We also assume that all cells $T=k, X=j$ have the same sample size $\frac{n}{(K+1) N}$.
We study this general model in \autoref{sect:illustration}; in this section, we assume throughout that $a,b,c,d, \sigma^2 > 0$.

For analyzing the case of many treatment arms in a high-dimensional setting, we consider the case where
\begin{align}
    n &\rightarrow \infty,
    &
    N & \rightarrow \infty,
    &
    \frac{n}{(K+1) N} \rightarrow m < \infty,
\end{align}
and both the case of fixed (but potentially high) $K$ and the case where $K \rightarrow \infty$.
This approximation regime represents a case where the number of covariate cells and treatment arms increases fast enough relative to the sample size for the estimation of the cell-wise means $\mu_j^k$ to be hard even in the limit.
This is motivated by cases such as the mega-study in our application, where some of the many treatment arms have only a few hundred observations across all covariate values.

\subsection{Best-Arm Assignment vs Utility Maximization}
\label{sec:bestarm}

We compare two different goals when learning an assignment $\hat{a}: \X \rightarrow \R$:
\begin{enumerate}
    \item Best-arm assignment: maximize $\mathrm{P}(a(X) = \argmax_k \mathrm{E}[Y|T=k,X])$;
    \item Utility maximization: maximize $\mathrm{E}[Y^{a(X)}]$.
\end{enumerate}
In our homoscedastic setting, the average outcomes for both goals are maximized by an assignment rule $\hat{a}$ that picks, for each cell, the arm with the highest posterior expectation.
In \autoref{sect:illustration} we compare this assignment to the infeasible oracle $a^*(j) = \argmax_k \mu_j^k$ (as an upper bound for performance) and the assignment $\underline{a}$ that assigns treatments randomly (as a lower bound for performance),
and show:

\begin{proposition}[Limits of many-treatment best-arm identification]
    As $K \rightarrow \infty$,
    $
        \mathrm{P}(\hat{a}(X) = \argmax_k \mu_X^k) \rightarrow 0.
    $
    At the same time,
    \[
        \frac{\mathrm{E}[Y^{\hat{a}(X)}] - \mathrm{E}[Y^{\underline{a}(X)}]}{\mathrm{E}[Y^{a^*(X)}]- \mathrm{E}[Y^{\underline{a}(X)}]}
        \rightarrow \text{const.} > 0,
    \]
    where the distribution is over the prior and the data.
\end{proposition}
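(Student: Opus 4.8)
The plan is to reduce everything to two arm-wise scalar ``scores'' per covariate cell and then exploit that the Bayes assignment ranks arms by a score that is imperfectly correlated with the oracle's score. Fix a cell $j$ and work in the stated limit. Since $N\to\infty$, averaging the cell means $\overline{Y}^k_{j'}$ over cells $j'$ identifies $\alpha+\gamma^k$ for each arm, so the within-cell posterior reduces to shrinking the residuals $r^k_j=\overline{Y}^k_j-(\alpha+\gamma^k)=\beta_j+\delta^k_j+\epsilon^k_j$, where $\epsilon^k_j\sim\mathcal{N}(0,\sigma^2/m)$. A Sherman--Morrison computation of the posterior mean of $\delta^k_j$ gives $\hat\delta^k_j=\rho\,(r^k_j-\mathrm{const}_j)$ with $\rho=d/(d+\sigma^2/m)$ and a cell-specific quantity $\mathrm{const}_j$ that is common across arms. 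Because the oracle ranks a cell by $W^k_j=\gamma^k+\delta^k_j$ and the Bayes rule ranks by $Z^k_j=\gamma^k+\hat\delta^k_j$ (dropping the arm-independent pieces $\alpha+\beta_j$ and $\mathrm{const}_j$), I obtain two families of iid (across $k$) bivariate normals whose correlation I read off as $r=\sqrt{(c+\rho d)/(c+d)}$; the key point is that $r\in(0,1)$ strictly because $\rho<1$.

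For the first claim I standardize to $(\tilde Z^k,\tilde W^k)$, each marginally $\mathcal{N}(0,1)$ with correlation $r$, and write $\tilde W^k=r\,\tilde Z^k+\sqrt{1-r^2}\,\xi^k$ with $\xi^k$ independent of $\tilde Z^k$. The crucial observation is that the ordering of arms by $\tilde Z$---in particular the top index $\hat k=\hat k_1$ and the next indices $\hat k_2,\dots,\hat k_L$ in decreasing $\tilde Z$ order---depends only on $\{\tilde Z^k\}$ and is therefore independent of $\{\xi^k\}$, so the selected noises $\xi^{\hat k_1},\dots,\xi^{\hat k_L}$ are exactly $L$ iid $\mathcal{N}(0,1)$ draws. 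For the Bayes choice $\hat k$ to coincide with the oracle choice it must in particular beat the next $L-1$ arms in $\tilde W$; but $\tilde W^{\hat k}-\tilde W^{\hat k_l}=r(\tilde Z^{\hat k}-\tilde Z^{\hat k_l})+\sqrt{1-r^2}(\xi^{\hat k}-\xi^{\hat k_l})$, and the $\tilde Z$-gaps among the top order statistics of $K+1$ iid Gaussians vanish in probability (they are of order $1/\sqrt{2\ln K}$). Hence this event converges to $\{\xi^{\hat k}=\max(\xi^{\hat k_1},\dots,\xi^{\hat k_L})\}$, which has probability $1/L$, giving $\limsup_K\mathrm{P}(\hat a(X)=\argmax_k\mu_X^k)\le 1/L$ for every $L$; letting $L\to\infty$ yields the first limit.

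For the second claim I compute the three utilities up to the arm-independent additive terms that cancel in the ratio. Random assignment yields mean score $0$; the oracle yields $\mathrm{E}[\max_k W^k_j]=\sqrt{c+d}\,m_K$, where $m_K=\mathrm{E}[\max_{0\le k\le K}\zeta_k]$ for iid $\zeta_k\sim\mathcal{N}(0,1)$; and the Bayes rule yields $\mathrm{E}[W^{\hat k}_j]$. Using the same decomposition, $\mathrm{E}[W^{\hat k}_j]=\sqrt{c+d}\,\bigl(r\,\mathrm{E}[\max_k\tilde Z^k]+\sqrt{1-r^2}\,\mathrm{E}[\xi^{\hat k}]\bigr)$, and $\mathrm{E}[\xi^{\hat k}]=0$ by the same independence-of-selection argument (conditional on $\{\tilde Z^k\}$ the index $\hat k$ is fixed and $\xi^{\hat k}$ still has mean zero). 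Therefore $\mathrm{E}[W^{\hat k}_j]=\sqrt{c+d}\,r\,m_K$, and the ratio equals $r\in(0,1)$ exactly for every $K$, which is the required positive constant.

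I expect the main obstacle to be the first claim: the independence-of-selection trick together with the Gaussian extreme-value spacing estimate are what make the argmax-agreement probability collapse, and some care is needed to sequence the limits correctly ($K\to\infty$ first, then $L\to\infty$). By comparison, deriving the limiting posterior-mean score---the shared prerequisite for both parts---is a routine Gaussian conditioning calculation, and the utility ratio in the second claim then follows almost immediately from the linear decomposition and $\mathrm{E}[\xi^{\hat k}]=0$.
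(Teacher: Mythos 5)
Your proposal is correct. For the shared setup and the utility-ratio claim it is essentially the paper's own argument in lighter notation: your arm-wise score pairs $Z^k_j=\gamma^k+\rho\,(\delta^k_j+\epsilon^k_j)$ and $W^k_j=\gamma^k+\delta^k_j$ (with $\rho=d/(d+\sigma^2/m)$) are exactly the paper's posterior mean $Q'\hat{m}_j = Q'\gamma + \tfrac{d}{d+\sigma^2/m}Q'(\overline{Y}_j-\gamma)$ and oracle score, with the paper's $Q$-projection replaced by ``drop arm-independent terms''; your regression decomposition $\tilde{W}^k = r\tilde{Z}^k + \sqrt{1-r^2}\,\xi^k$ together with $\mathrm{E}[\xi^{\hat{k}}]=0$ by independence of selection is the same device as the unnamed Lemma the paper proves in its appendix ($Y=(X+Z)\tfrac{z}{x}$ with $\Cov(Z,X)=0$); and your constant $r=\sqrt{(c+\rho d)/(c+d)}$ coincides with the paper's $\sqrt{1-d^2\sigma^2/((md+\sigma^2)(c+d))}$, including its invariance in $K$. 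Where you genuinely go beyond the paper is the first claim: the paper's appendix simply asserts that $\mathrm{P}(\hat{a}(j)=\argmax_k\mu^k_j\mid\overline{Y}_j,\alpha,\gamma)\rightarrow 0$ in probability as $K\rightarrow\infty$ and offers no argument, whereas you supply a complete proof --- the top-$L$ indices by $\tilde{Z}$ are measurable with respect to $\{\tilde{Z}^k\}$, so the selected noises $\xi^{\hat{k}_1},\ldots,\xi^{\hat{k}_L}$ remain iid standard normal; the top-$L$ spacings of $K+1$ iid Gaussians are $O_P(1/\sqrt{2\ln K})$ and vanish; hence $\limsup_K \mathrm{P}(\hat{a}=a^*)\leq 1/L$ for every fixed $L$, and $L\rightarrow\infty$ finishes the claim. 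This extreme-value-spacing argument is the main added value of your write-up, and your ordering of limits ($K$ first, then $L$) is the right one. The trade-off between the two presentations: the paper's projection formalism is what carries over directly to its second proposition (separate arm-wise prediction), since there the misranking enters through a covariance mismatch that is cleanest in the $Q$-coordinates, while your iid-pair formulation is more elementary and is what makes the vanishing best-arm probability actually provable rather than merely plausible.
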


With many treatment arms, finding the best arm hence becomes hard (even with an optimal procedure), while recovering a sizable fraction of the utility gain from personalization remains feasible.
The reason is simple: with more and more arms, we may make selection mistakes in picking a similar arms with slightly lower mean; but that arm likely still has a higher-than-average mean.
Indeed, in \autoref{sect:illustration} we show that the fraction of optimal utility recovered does not change with $K$ in our asymptotic approximation.

\subsection{Separate Prediction vs Joint Assignment}
\label{sec:separate}

We now focus maximizing utility $\mathrm{E}[Y^{a(X)}]$, and consider two natural methods of achieving this goal:
\begin{enumerate}
    \item Arm-wise prediction: Learn separate predictions $\hat{f}^k: \X \rightarrow \R$ from $(Y_i,X_i)$ with $T_i=k$ that minimize $\mathrm{E}[(\hat{f}^k(X) - Y)|T=k]$, determine assignment by $\tilde{a}(X) = \argmax_k \hat{f}^k(X)$;
    \item Direct utility maximization: Learn an assignment $\hat{a}(X)$ that directly maximizes $\mathrm{E}[Y^{a(X)}]$.
\end{enumerate}
While the former strategy is compelling in practice since it can be readily implemented by using separate prediction  algorithms across treatment arms,
we show in \autoref{sect:illustration} that this strategy is suboptimal in our high-dimensional asymptotic approximation:
\begin{proposition}[Limits of arm-wise prediction]
    In our limiting framework,
    \begin{equation*}
        \frac{\mathrm{E}[Y^{\tilde{a}(X)}] - \mathrm{E}[Y^{\underline{a}(X)}]}{\mathrm{E}[Y^{\hat{a}(X)}]- \mathrm{E}[Y^{\underline{a}(X)}]}
        \rightarrow \text{const.} < 1,
    \end{equation*}
    where the distribution is over the prior and the data.
\end{proposition}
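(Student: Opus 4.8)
The plan is to reduce both assignment rules to explicit linear-index selection rules in the conjugate Gaussian model, and then compare the utility they extract through a single scalar ``ranking-quality'' coefficient whose $K$-dependence cancels in the ratio.

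First I would use conjugacy of the Gaussian prior and likelihood on the balanced design to write both procedures in closed form. Fix a representative cell $j$; since $\alpha + \beta_j$ is common to all arms it drops out of every $\argmax_k$, so the assignment in cell $j$ depends only on the within-cell value $v_k = \gamma^k + \delta_j^k$ and its estimate, with $\epsilon_j^k$ denoting the mean-zero cell estimation error of variance $s^2 = \sigma^2/m$. Direct utility maximization is the Bayes rule, $\hat a(j) = \argmax_k \E[\mu_j^k \mid \text{data}]$; computing the ANOVA BLUP in the regime $N \to \infty$, where the grand, row, and column contrasts concentrate (and the common recentering $\bar\gamma \to 0$ as $K \to \infty$), gives the joint ranking statistic $S_k^{J} = \gamma^k + w_J(\delta_j^k + \epsilon_j^k)$ with $w_J = d/(d+s^2)$. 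For arm-wise prediction, the Bayes predictor built from arm-$k$ data alone shrinks $\bar Y_j^k$ toward the consistently estimated arm mean $c_k = \alpha + \gamma^k$; since it cannot separate the shared row effect $\beta_j$ from the interaction $\delta_j^k$, it uses the inflated prior variance $b+d$, yielding $S_k^{A} = \gamma^k + w_A(\delta_j^k + \epsilon_j^k)$ with $w_A = (b+d)/(b+d+s^2)$. The key structural fact is that $\gamma^k$ enters both statistics with unit weight (it is learned exactly as $N \to \infty$), so the two rules differ only in the interaction shrinkage, and $w_A > w_J$ precisely because $b > 0$.

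Next I would express the utility gain over random assignment as the expected selected value $V(a) = \E[v_{a(j)}]$ and reduce the comparison to the scalar $Q(w) = \Cov(v, S)/\SD(S) = (c + wd)/\sqrt{c + w^2(d+s^2)}$, using that the pairs $(S_k, v_k)$ are i.i.d.\ bivariate Gaussian across $k$. A short calculus computation shows $Q$ is uniquely maximized at $w = d/(d+s^2) = w_J$ (consistent with $\hat a$ being Bayes-optimal), so $Q(w_A) < Q(w_J)$ strictly. For the many-arm limit I would invoke Gaussian extreme-value asymptotics: selecting the maximizing arm pushes $S$ into the tail at height $\SD(S)\sqrt{2\ln K}$, and conditioning on the selection gives $V(a) \sim Q(w)\sqrt{2\ln K}$. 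The $\sqrt{2\ln K}$ factor cancels in the ratio, leaving $V(\tilde a)/V(\hat a) \to Q(w_A)/Q(w_J) < 1$, a constant. For fixed (but large) $K$ the conclusion follows more directly, since $\hat a$ is the Bayes rule and hence $V(\hat a) \ge V(\tilde a)$ for every realization, with strictness because $w_A \ne w_J$ alters the selection with positive probability.

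I expect the main obstacle to be the extreme-value step: making rigorous the equivalence $V(a) \sim Q(w)\sqrt{2\ln K}$ requires controlling the conditional mean $\E[v \mid S = t]$ over the selection region and establishing the uniform integrability needed for the ratio of expectations to converge to the ratio of the $Q$'s. A secondary technical point is justifying the interchange of the $N \to \infty$ and $K \to \infty$ limits when reading off the shrinkage weights from the ANOVA contrasts, in particular verifying that the vanishing of the $\bar\delta_{j\cdot}$, $\bar\epsilon_{j\cdot}$, and column-noise terms is uniform enough that $w_J$ and $w_A$ are the correct limiting weights. The remaining pieces, namely the conjugate posterior computation and the optimization of $Q(w)$, are routine.
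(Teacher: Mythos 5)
Your derivation of the two ranking statistics is exactly the paper's: in the limiting regime the Bayes rule ranks arms by $\gamma^k + w_J(\delta^k_j+\epsilon^k_j)$ with $w_J = d/(d+\sigma^2/m)$, the arm-wise rule uses the same index with $w_A = (b+d)/(b+d+\sigma^2/m)$, and your comparison scalar $Q(w)=\Cov(v_k,S_k)/\SD(S_k)$ reproduces the paper's two coefficients exactly, including the degenerate cases ($b=0$ or $c=0$ giving ratio one). The genuine difference is the finishing step, and there the paper's route makes your ``main obstacle'' disappear. You evaluate the selected value by Gaussian extreme-value asymptotics, $V(a)\sim Q(w)\sqrt{2\ln K}$ as $K\to\infty$, and worry about controlling $\E[v\mid S=t]$ over the selection region plus uniform integrability. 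But joint normality makes that conditional mean \emph{exactly} linear: since the pairs $(S_k,v_k)$ are independent across $k$ with $\Var(S_k)=x$, $\Cov(S_k,v_k)=z$, one can write $v = \frac{z}{x}S + Z$ with $Z$ independent of $S$, whence $\E[v_{\argmax_k S_k}] = \frac{z}{x}\E[S^{(1)}] = Q(w)\,\E[\mathcal{N}(\0,\I)^{(1)}]$ exactly, for every finite $K$ (here $x^{(1)}$ denotes the maximum entry of a vector). This is the paper's lemma; with it the factor $\E[\mathcal{N}(\0,\I)^{(1)}]$ cancels identically in the ratio, which equals $Q(w_A)/Q(w_J)$ for all $K$ --- no tail analysis, no interchange-of-limits issue, and the proposition follows in the fixed-$K$ limiting regime as well as for $K\to\infty$ (indeed the constant is invariant to $K$, which is what the paper emphasizes). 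Your fixed-$K$ fallback via Bayes dominance is sound but yields only strict inequality, not the explicit constant, and your concern about the recentering $\bar\gamma$ is moot since subtracting a common constant across arms changes neither the $\argmax$ nor, in expectation, the maximum (the paper handles this via the projection $Q$ orthogonal to the all-ones direction and the identity $\E[(Q\mathcal{N}(\0_K,\I_K))^{(1)}]=\E[\mathcal{N}(\0_{K+1},\I_{K+1})^{(1)}]$). In short: your reduction is correct and matches the paper; replacing the extreme-value step with the exact decorrelation lemma would turn your sketch into a complete and simpler proof.
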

When separate prediction algorithms only use data from a single arm, they may misattribute variation in the baseline that is common to all treatment arms with a given covariate value, and therefore overfit to individual cell outcomes.
In a high-dimensional limit, this under-performance does not go away.

\section{Extensions}
\label{sect:extensions}

In this section, we discuss extensions to the baseline algorithm discussed in \autoref{sect:treeforest}.

\subsection{Propensity Scores and Weighting}

The above algorithm assumes that treatments are assigned randomly with constant propensity score. But we can easily modify the algorithm for a known propensity score $p^k(X)$.
If we also care about weighted outcomes
\begin{equation*}
    \frac{E[v(X) \: Y^{a(X)}]}{E[v(X)]}
\end{equation*}
for some given positive weights $v(X)$,
then two natural (unregularized) estimators of leaf-wise utility are
\begin{align*}
    \hat{U}_\ell^N &= \frac{\sum_{i \in \ell} v(X_i)}{\sum_{i \in \ell, T_i = a_\ell} \frac{v(X_i)}{p^{a_\ell}(X_i)}} \sum_{i \in \ell, T_i = a_\ell} v(X_i) \frac{Y_i}{p^{a_\ell}(X_i)}
    &
    &\text{ and }
    &
    \hat{U}_\ell^P &= \sum_{i \in \ell, T_i = a_\ell} v(X_i) \frac{Y_i}{p^{a_\ell}(X_i)}.
\end{align*}
The latter estimator is unbiased for the utility of assigning the leaf accordingly by standard inverse-propensity weighting, yielding an overall unbiased estimate of the associated policy as in \citet{Hitsch2018-bw}.
We obtain the criteria in \autoref{sect:treeforest} from $v \equiv 1$, $p^k \equiv P^k$.

\subsection{Regularization and Shrinkage Across Leaves}

In \autoref{sect:treeforest}, we estimate arm-wise average outcomes within leaves using regularization to avoid noise when there are only a few observations in a given arm.
The regularization scheme implicitly assumes a homoscedastic Normal-means model with a Normal prior on leaf-specific arm-wise averages and an uninformative hyperprior (corresponding to an empirical-Bayes strategy).
The tuning parameter $\lambda$ corresponds to the ratio of the unit-specific variance to the variance of the Normal prior.
As a more complete treatment, we can also include shrinkage towards the overall arm-wise average across leaves.
Further, we can estimate arm-wise variances of outcomes to refine shrinkage.

\subsection{Hierarchical Bayesian Modelling}

In \autoref{sect:clustering}, we consider an ad-hoc $k$-means clustering scheme. We could instead consider a Bayesian model similar to \autoref{sect:feasibility} that assumes that some arms are more similar to each other than others.
Such a model would generalize the shrinkage scheme in \autoref{sect:treeforest} by shrinking arms more towards those in the same group or with higher similarity.
We could then estimate such a grouped shrinkage scheme with a $k$-means algorithm as in  \autoref{sect:clustering} or with an estimation of arm similarity corresponding to covariances of arm-wise means in the Bayesian model.

\subsection{Iterative Clustering}

Our current cluster scheme starts by clustering treatment arms into groups based on a simple, non-clustered run of the assignment-forest algorithm.
Alternatively, we could iterate the prediction and clustering steps to refine the clustering, either starting with a non-clustered or a randomly clustered assignment.
In addition, when clustering we could take into account whether an arm is likely to be chosen for treatment, and ensure that clusters capture similarity mainly for those observations-specific arms that are likely to affect assignment.
Finally, we could maintain the clusters for assignment, assuming that within-group assignments are random.

\section{Conclusion}
\label{sect:conclusion}

In this article, we consider learning treatment assignments from experimental data with many treatment arms. We demonstrate the limits of estimating optimal treatment arms and recovering effective treatment assignment policies from separate arm-wise outcome predictions or treatment-effect estimates.
As a feasible alternative, we provide a regularized tree-based algorithm that directly optimizes for treatment assignment, clusters treatment arms, and document its properties in a simulation study.

Our current analysis remains limited to experiments with known probabilities of assignment to different arms. When treatment assignment is endogenous and propensity scores are unknown, then their estimation poses additional challenges that are beyond the scope of this article. Similarly, we focus on existing experimental data, and do not consider optimal experimentation of the dynamic allocation to treatment arms.

The use of targeting rules, like those obtained from our algorithm, has the potential to improve utility through better allocation, but also comes with substantial risks when data is biased or personalization may reinforce or increase inequities. Adding fairness and equity constraints to the resulting treatment-assignment rules can be an important future extension.

\newpage
\bibliographystyle{apalikefull}
\bibliography{references}

\clearpage
\appendix
\section*{Appendix}
\counterwithin{figure}{section}
\label{appendix}
\renewcommand{\sectionautorefname}{Appendix}

\section{Residualization by Weighted Baseline}
\label{sect:weighting}

The weighted average
\begin{equation*}
    \bar{f}(X) = \frac{\mathrm{E}\left[Y / (p^{T})^2 \middle|X\right]}{\mathrm{E}[1/(p^{T})^2]}
    =
    \frac{\sum_{k=0}^K \mathrm{E}\left[Y\middle|T=k,X\right] / p^k}{\sum_{k=0}^K 1/p^k},
\end{equation*}
which takes into account that outcomes assigned to treatment $T=k$ get weighted by empirical analogues of the inverse propensity score $1/{p^k}$ when constructing leaf-wise averages,
minimizes the average residual variance
\begin{align*}
    & \mathrm{E}\left[\Var\left( (Y - \bar{f}(X)) / p^T |T \right)\right]
    \\
    &=
    \mathrm{E}\left[\left(
        \Var(Y|T) - 2 \Cov(Y,\bar{f}(X)|T) + \Var(\bar{f}(X)|T)
        \right) / (p^T)^2
    \right]
    \\
    &=
    \mathrm{E}\left[
        \Var(Y|T) / (p^T)^2
        - 2 \Cov(Y / (p^T)^2,\bar{f}(X)|T) +
        \Var(\bar{f}(X)) / (p^T)^2
    \right]
    \\
    &=
    \text{const.}
    +
    \mathrm{E}[1/ (p^T)^2]
    \left(
    - 2 \mathrm{E}\left[\Cov\left(\frac{Y / (p^T)^2}{\mathrm{E}[1/ (p^T)^2]},\bar{f}(X)\middle|T\right)\right]
    + \Var(\bar{f}(X)) \right) 
    \\
    &=
    \text{const.}
    + \text{const.} \cdot
    \left(  - 2 \Cov\left(\frac{Y / (p^T)^2}{\mathrm{E}[1/ (p^T)^2]},\bar{f}(X)\right)
    + \Var(\bar{f}(X)) \right)
    \\
    &=
    \text{const.}
    + \text{const.} \cdot
    \Var\left(\frac{Y / (p^T)^2}{\mathrm{E}[1/ (p^T)^2]} - \bar{f}(X)\right).
\end{align*}
This generalizes the adjustment by $p^1  \mathrm{E}[Y^0|X] + p^0  \mathrm{E}[Y^1|X]$ for the case $K=1$ in \citet{Wu:2017wj}.
Generalizing a result in \citet{spiessoptimal},
$\bar{f}$ solves the weighted prediction problem
$
    \mathrm{E}\left[
        v^T
        \left(
            Y - f(X)
        \right)^2
    \right]
    \rightarrow \min_f
$
with weights $v^T = 1 / (p^T)^2$.

\newpage

\section{Treatment Assignment Rates in the Simulation}
\begin{figure}[hbt!]
    \centering
    \includegraphics[width=.9\textwidth]{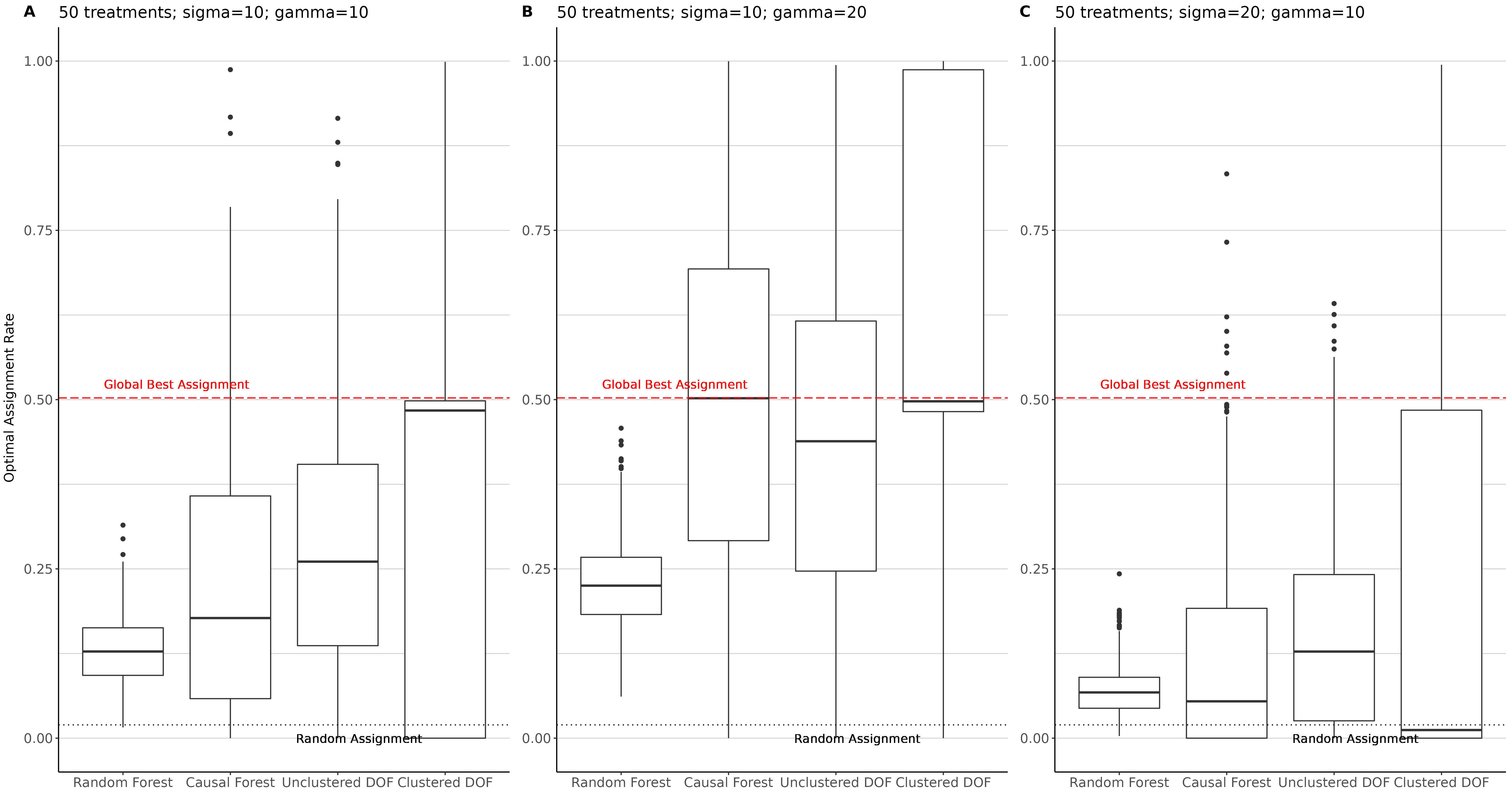}
    \caption{Boxplots of 500 simulations comparing the average assignment rate of the units under assignment rules learned from (l to r):  separate random forests, multi-arm causal forest, unclustered regularized direct optimization forest, and clustered regularized direct optimization forest for 50 treatment arms in A. $\gamma = 10, \sigma = 10$ (regular setting), B. $\gamma = 20, \sigma = 10$ (''low noise'' setting), C. $\gamma = 10, \sigma = 20$ (``high noise'' setting).}
    \label{fig:50treatmentsassignmentrate}
\end{figure}

\begin{figure}[hbt!]
    \centering
    \includegraphics[width=.9\textwidth]{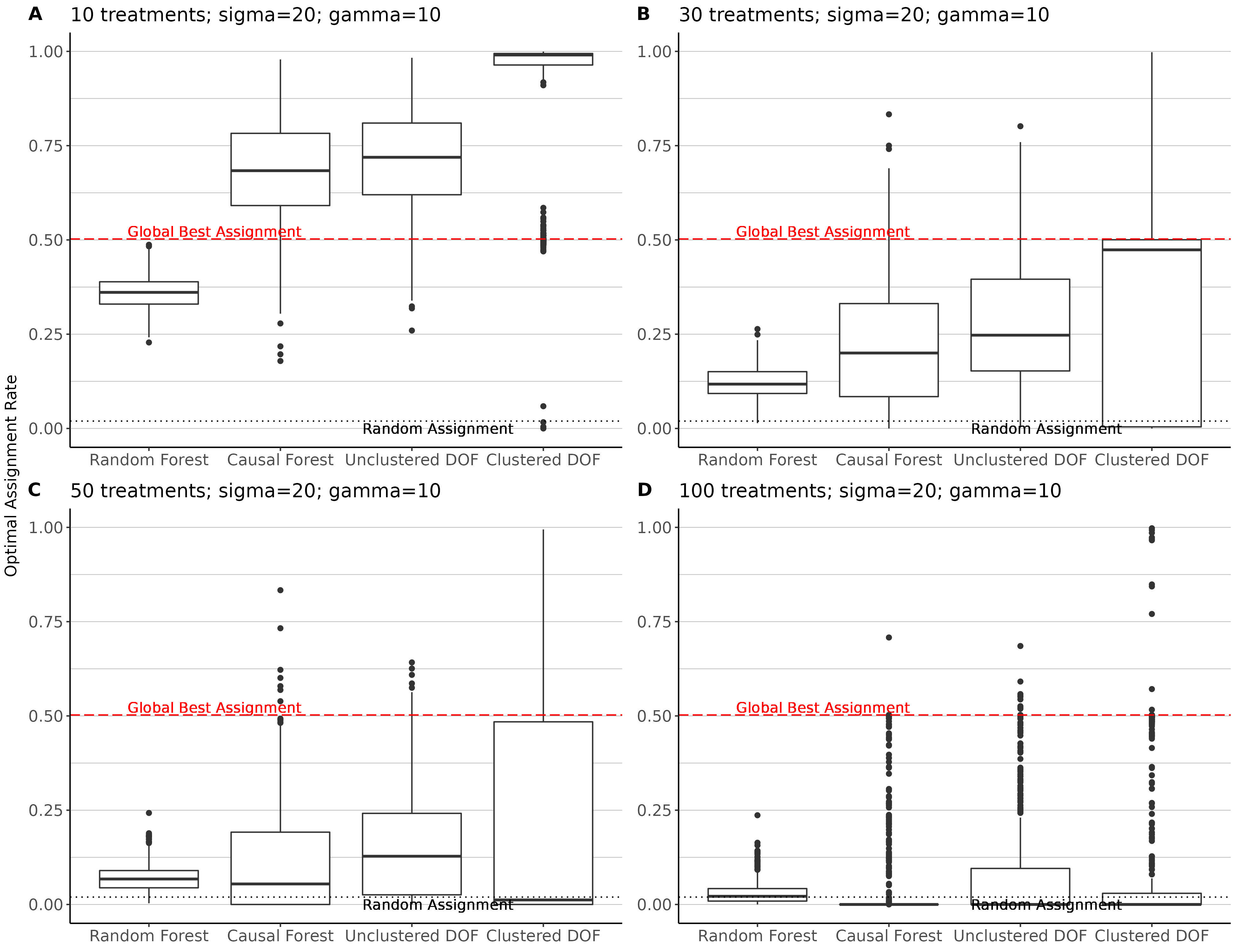}
    \caption{Boxplots of 500 simulations comparing the average average assignment rate of the units under assignment rules learned from (l to r):  separate random forests, multi-arm causal forest, unclustered regularized direct optimization forest, and clustered regularized direct optimization forest in a  ''high noise'' setting ($\sigma = 20, \gamma = 10$ ) for A. 10, B. 30, C. 50, and D. 100 treatment arms.}
    \label{fig:highnoiseassignmentrate}
\end{figure}
\newpage 
\section{Normal Model Illustration}
\label{sect:illustration}

\newcommand{\1}{\mathbf{1}}
\renewcommand{\O}{\mathbbm{O}}

Here we work out the simple Normal model used for illustration in
Section~\ref{sect:feasibility}.

We consider a homoscedastic baseline model with $\X = \{1,\ldots,N\}$ and
\begin{align*}
    Y | T=k, X=j &\sim \mathcal{N}(\mu^k_j,\sigma^2).
\end{align*}
We assume that the $\mu^k_j$ are themselves distributed according to a multivariate Normal prior that is invariant to permutations in treatment arms and covariates identities, capturing the idea that these are ex-ante indistinguishable.
These restrictions imply that we can write
\begin{equation*}
    \mu^k_j = \alpha + \beta_j + \gamma^k + \delta^k_j
\end{equation*}
with independent priors
\begin{align*}
    \alpha &\sim \mathcal{N}(0,a),
    &
    \beta_j &\sim \mathcal{N}(0,b),
    &
    \gamma^k &\sim \mathcal{N}(0,c),
    &
    \delta^k_j &\sim \mathcal{N}(0,d),
\end{align*}
where the zero mean assumption is  for our analysis.
For simplicity, we further assume that all cells $T_i = j, X_i = k$ are of equal size (hence, of size $m_n = n / ((K_n+1) N_n)$).

We now focus on cell $j \in \X$.
By linearity and exchangeability,
the posterior for $\mu_j$ only depends on the data through the vectors
\begin{align*}
    &\overline{Y}_j,
    & 
    &\overline{Y}_{-j}
\end{align*}
of averages of cell $j$ and of outcomes in all other cells.
Let $q =\1_{K+1} / \sqrt{K+1}$ and $Q \in \R^{(k + 1) \times k}$ be such that $(Q,q)$ orthonormal. Only $Q' \mu_j$ is relevant for rankings between treatment arms,
and
\begin{align*}
    Q' \mu_j &= Q'\gamma + Q'\delta_j,
    &
    &Q'\gamma \sim \mathcal{N}(\0,\I c),
    Q'\delta_j \sim \mathcal{N}(\0,\I d),
    \\
    q' \mu_j &= q'\1 \alpha + q'\1 \beta_j + q'\gamma + q'\delta_j,
    &  
    &\alpha \sim \mathcal{N}(0,a),
    \beta_j \sim \mathcal{N}(0,b),
    q'\gamma \sim \mathcal{N}(0,c),
    q'\delta \sim \mathcal{N}(0,d),
\end{align*}
where all distributions are independent for a given $j$.
For the data (integrating over the prior over $\delta_{j'}, j'\neq j$),
\begin{align*}
    Q'\overline{Y}_j &\sim \mathcal{N}\left(Q'\gamma + Q'\delta_j, \I \frac{\sigma^2}{m_n} \right)
    &
    Q'\overline{Y}_{-j} &\sim \mathcal{N}\left(Q'\gamma, \I \left(\frac{\sigma^2 / m_n + d}{N_n-1}\right) \right)
\end{align*}
independently of each other and of the distribution of $q'\overline{Y}_j$ and $q'\overline{Y}_{-j}$.

We now consider a limit with $n \rightarrow \infty, m_n \rightarrow m, N_n \rightarrow \infty, K_n \rightarrow K$.
In this limit we learn $\alpha$ and $\gamma$ from $\overline{Y}_{-j}$,
and
\begin{align*}
    Q'\mu_j &| \overline{Y}_{j}, \gamma
    \sim \mathcal{N}\Bigg(\underbrace{Q'\gamma + \frac{d}{d+\sigma^2/m} Q'(\overline{Y}_{j} - \gamma)}_{=Q'\hat{m}_j}, \I \frac{d^2}{d+\sigma^2/m}\Bigg),
    \\
    q'\mu_j &| \overline{Y}_{j}, \alpha, \gamma
    \sim \mathcal{N}\Bigg(\underbrace{q'\1 \alpha + q'\gamma + \frac{b{+}d}{b{+}d+\sigma^2/m} (q'(\overline{Y}_{j}-\gamma) - \alpha)}_{=q'\hat{m}_j}, \frac{(b{+}d)^2}{b{+}d+\sigma^2/m}\Bigg),
    \\ 
    Q'\hat{m}_j
    &= Q'\gamma + \frac{d}{d+\sigma^2/m} \left(Q'\delta_j + \mathcal{N}\left(\0,\I \frac{\sigma^2}{m}\right)\right) \sim
    \mathcal{N}\left(\0,
    \I
        \left(
            c + \frac{d^2}{d+\sigma^2/m}
    \right)\right),
    \\
    q'\hat{m}_j &= q'\1 \alpha + q'\gamma + \frac{b{+}d}{b{+}d+\sigma^2/m} \left(q'\1 \beta_j + q'\delta_j + \mathcal{N}\left(0,\frac{\sigma^2}{m}\right)\right) \sim
    \mathcal{N}\left(\0, 
            a{+}c + \frac{(b{+}d)^2}{b{+}d+\sigma^2/m}
    \right).
\end{align*}

The optimal assignment policy that maximizes expected utility (and, in this homoscedastic case, also assignment probabilities) is
\begin{equation*}
    \hat{a}(j)
    =
    \argmax_{k} (Q \underbrace{\mathrm{E}\left[Q'\mu_j\middle|\overline{Y}_{j},\gamma\right]}_{=Q'\hat{m}_j})^k.
\end{equation*}
Give the data, the posterior expectation of average outcome and best-arm probability under $\hat{a}(j)$ are
\begin{align*}
    \mathrm{E}[\mu_j^{\hat{a}(j)}|\overline{Y}_{j},\alpha,\gamma]
    &= 
    \max_k \hat{m}_j^k = q'\hat{m}_j + (Q Q'\hat{m}_j)^{(1)}
    \\
    \mathrm{P}(\hat{a}(j) = \argmax_k{\mu_j^k}|\overline{Y}_{j},\alpha,\gamma)
    &= \mathrm{P}\left((Q Q'\hat{m}_j)^{(1)} = \left(Q \mathcal{N}\left(Q'\hat{m}_j,\I \frac{d^2}{d+\sigma^2/m}\right)\right)^{(1)}
    \middle| \hat{m}_j \right)
\end{align*}
where we write $x^{(1)}$ for the maximum of a vector.

We compare this assignment to the assignment $\underline{a}(j)$ that randomizes treatment arms equally and the infeasible optimal oracle assignment $a^*(j) = \argmax_k \mu_j^k$.
Averaging over the prior distribution and the data,
\begin{align*}
    \mathrm{E}[\mu_j^{\hat{a}(j)}]
    &=
    \mathrm{E}\left[\left(Q \mathcal{N}\left(\0,\I \left(c + \frac{d^2}{d+\sigma^2/m}\right)\right)\right)^{(1)}\right]
    = \sqrt{c + \frac{d^2}{d+\sigma^2/m}} \mathrm{E}\left[\left(Q \mathcal{N}\left(\0,\I\right)\right)^{(1)}\right],
    \\
    \mathrm{E}[\mu_j^{a^*(j)}]
    &= \mathrm{E}\left[\left(Q \mathcal{N}\left(\0,\I \left(c + d \right)\right)\right)^{(1)}\right]
    = \sqrt{c + d} \mathrm{E}\left[\left(Q \mathcal{N}\left(\0,\I\right)\right)^{(1)}\right],
    \\
    \mathrm{E}[\mu_j^{\underline{a}(j)}]
    &= 0,
\end{align*}
so
\begin{equation*}
    \frac{\mathrm{E}[\mu_j^{\hat{a}(j)}] - \mathrm{E}[\mu_j^{\underline{a}(j)}]}{\mathrm{E}[\mu_j^{a^*(j)}]- \mathrm{E}[\mu_j^{\underline{a}(j)}]}
    =
    \sqrt{\frac{c + \frac{d^2}{d+\sigma^2/m}}{c + d}}
    = \sqrt{1 - \frac{d^2 \sigma^2 }{ (m d + \sigma^2)(c+d)}},
\end{equation*}
invariant to $K$.
At the same time,
\begin{align*}
    \mathrm{P}(\hat{a}(j) = \argmax_k{\mu_j^k}|\overline{Y}_{j},\alpha,\gamma) 
    & \rightarrow 0
\end{align*}
in probability
as $K \rightarrow \infty$, while $\mathrm{P}(a^*(j) = \argmax_k{\mu_j^k}) = 1$ and $\mathrm{P}(\underline{a}(j) = \argmax_k{\mu_j^k}) = 1 / (K+1)$,
so
\begin{equation*}
    \frac{\mathrm{P}(\hat{a}(j) = \argmax_k{\mu_j^k}) - \mathrm{P}(\underline{a}(j) = \argmax_k{\mu_j^k})}{\mathrm{P}(a^*(j) = \argmax_k{\mu_j^k}) - \mathrm{P}(\underline{a}(j) = \argmax_k{\mu_j^k})}
    \rightarrow 0
\end{equation*}
as $K \rightarrow \infty$.

Consider now the alternative assignment rule
\begin{equation*}
    \tilde{a}(j) = \argmax_k \mathrm{E}[\mu^k_j|\overline{Y}_j^k,\overline{Y}_{-j}^k]
\end{equation*}
that estimates arms separately to minimize individual mean-squared error.
In the limiting regime with $n \rightarrow \infty, m_n \rightarrow m, N_n \rightarrow \infty, K_n \rightarrow K$,
$\mathrm{E}[\overline{Y}_{-j}^k|\alpha,\gamma] = \alpha + \gamma^k$ is known, and
\begin{equation*}
    \mathrm{E}[\mu^k_j|\overline{Y}_{j}^k,\alpha + \gamma^k]
    =
    \alpha + \gamma^k + \frac{b+d}{b+d+\frac{\sigma^2}{m}} (\overline{Y}_{j}^k - \alpha - \gamma^k).
\end{equation*}
Hence,
\begin{align*}
    \tilde{a}(j) &= \argmax_k \Bigg(Q \underbrace{\left(Q' \gamma + \frac{b+d}{b+d+\frac{\sigma^2}{m}} (Q'\overline{Y}_{j} - Q'\gamma)\right)}_{=Q'\tilde{m}_j}\Bigg)^k
\end{align*}
Relative to the optimal rule, this rule is equivalent to a rule that incorrectly attributes variation in the baseline $\beta_j$ (which does not affect the ranking) to relative variation in arm-wise means (which would affect the ranking), therefore overfitting to the data relative to the true posterior mean $Q'\hat{m}_j = Q' \gamma + \frac{d}{d+\frac{\sigma^2}{m}} (Q'\overline{Y}_{j} - Q'\gamma)$ of $Q'\mu_j$.
To compare performance, we invoke the following result that provides a generalization of the above calculus around maximizers of Normal random variables:

\begin{lemma*}
    Assume that $X, Y$ jointly multivariate Normal of the same dimension with mean zero and $\Var(X) = \I x, \Var(Y) = \I y, \Cov(X,Y) = \I z$.
    Then
    $
        \mathrm{E}[Y^{\argmax_k X^k}]
        = \frac{z}{x} \mathrm{E}[X^{(1)}]
        = \frac{z}{\sqrt{x}} \mathrm{E}[\mathcal{N}(\0,\I)^{(1)}]
        = \frac{z}{\sqrt{x \: y}} \mathrm{E}[Y^{(1)}].
    $
\end{lemma*}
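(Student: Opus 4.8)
The plan is to condition on $X$ and exploit that the assumed covariance structure makes the regression of $Y$ on $X$ coordinate-wise and scalar. Since $(X,Y)$ is jointly Gaussian with mean zero, $\Var(X)=\I x$, and $\Cov(Y,X)=\I z$, the standard Gaussian conditional-mean formula gives
\begin{equation*}
    \mathrm{E}[Y \mid X] = \Cov(Y,X)\,\Var(X)^{-1} X = (\I z)(\I x)^{-1} X = \tfrac{z}{x}\,X,
\end{equation*}
so coordinate-wise $\mathrm{E}[Y^k \mid X] = \tfrac{z}{x}\,X^k$. Here I use $x>0$, which makes $\Var(X)$ invertible and (together with continuity of the Gaussian law) makes the $\argmax$ almost surely unique.

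First I would observe that $K^\ast := \argmax_k X^k$ depends on $X$ only, hence is $X$-measurable. By the tower property and the coordinate-wise conditional mean,
\begin{equation*}
    \mathrm{E}\!\left[Y^{\argmax_k X^k}\right]
    = \mathrm{E}\!\left[\mathrm{E}[Y^{K^\ast}\mid X]\right]
    = \mathrm{E}\!\left[\tfrac{z}{x}\,X^{K^\ast}\right]
    = \tfrac{z}{x}\,\mathrm{E}[X^{(1)}],
\end{equation*}
where I used $X^{K^\ast} = \max_k X^k = X^{(1)}$. This is the first asserted equality.

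The remaining two equalities are Gaussian rescalings. Writing $X = \sqrt{x}\,Z$ with $Z \sim \mathcal{N}(\0,\I)$ yields $\mathrm{E}[X^{(1)}] = \sqrt{x}\,\mathrm{E}[\mathcal{N}(\0,\I)^{(1)}]$, so $\tfrac{z}{x}\mathrm{E}[X^{(1)}] = \tfrac{z}{\sqrt{x}}\mathrm{E}[\mathcal{N}(\0,\I)^{(1)}]$; and the same scaling applied to $Y$ gives $\mathrm{E}[Y^{(1)}] = \sqrt{y}\,\mathrm{E}[\mathcal{N}(\0,\I)^{(1)}]$ (using $y>0$), which I substitute to obtain $\tfrac{z}{\sqrt{xy}}\mathrm{E}[Y^{(1)}]$.

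The entire argument hinges on the single observation that the scalar-multiple-of-identity structure of $\Var(X)$ and $\Cov(Y,X)$ makes the conditional mean exactly proportional to $X$ coordinate-wise; after that, everything is iterated expectations and elementary scaling. The only step requiring genuine care --- and what I would flag as the main (if minor) obstacle --- is justifying the exchange $\mathrm{E}[Y^{K^\ast}\mid X] = \sum_k \Ind[K^\ast=k]\,\mathrm{E}[Y^k\mid X]$: this needs $K^\ast$ to be $X$-measurable and almost surely unique, which follows from continuity of the Gaussian distribution (ruling out ties) together with $x>0$.
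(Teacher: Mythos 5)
Your proof is correct and takes essentially the same route as the paper: the paper decomposes $Y = \frac{z}{x}X + \bigl(Y - \frac{z}{x}X\bigr)$ with the residual independent of $X$ and mean zero, which is exactly the Gaussian projection fact you invoke via $\mathrm{E}[Y \mid X] = \frac{z}{x}X$ and the tower property, followed by the same rescaling steps. If anything, your conditioning formulation is slightly cleaner at the edge case $z=0$ (the paper's residual $Z = Y\frac{x}{z} - X$ divides by $z$), but the underlying argument is identical.
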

\begin{proof}
    Note that
    $Y =  X \frac{z}{x} + Y -  X \frac{z}{x} = (X + Z)\frac{z}{x}$
    where $Z = Y \frac{x}{z}  - X$
    fulfills $\Cov(Z,X) = \I (\frac{x}{z} z - x) = \O$,
    so
    $
        \mathrm{E}[X^{\argmax_k Y}]
        = \frac{z}{x} \mathrm{E}[(X+Z)^{\argmax_k X^k}] \:
        = \frac{z}{x} \mathrm{E}[X^{(1)}]
        = \frac{z}{\sqrt{x}} \mathrm{E}[\mathcal{N}(\0,\I x / x)^{(1)}].
    $
\end{proof}

Since only relative rankings of the vectors matter, we can apply the lemma e.g. to conformal $X,Y$ with $Q'X = Q'\tilde{m}_j, Q'Y = Q'\mu_j$ with diagonal variances and covariances
\begin{align*}
    \Var(X) &= \I \left(c + \frac{(b+d)^2\left(d + \frac{\sigma^2}{m}\right)}{\left(b + d + \frac{\sigma^2}{m}\right)^2}\right),
    &
    \Cov(X,Y) &= \I \left(c + \frac{d (b+d) }{b + d + \frac{\sigma^2}{m}}\right)
\end{align*}
to find that
\begin{equation*}
    \mathrm{E}[\mu_j^{\tilde{a}(j)}]
    =
    \frac{c \left(b + d + \frac{\sigma^2}{m}\right) + d (b+d)}{\sqrt{c \left(b + d + \frac{\sigma^2}{m}\right)^2 +(b+d)^2\left(d + \frac{\sigma^2}{m}\right)}}
    \mathrm{E}\left[\left(Q \mathcal{N}\left(\0,\I\right)\right)^{(1)}\right],
\end{equation*}
where we have used that $\mathrm{E}\left[\left(Q \mathcal{N}\left(\0_K,\I_K\right)\right)^{(1)}\right] = \mathrm{E}\left[\left(\mathcal{N}\left(\0_{K+1},\I_{K+1}\right)\right)^{(1)}\right]$.
In particular,
\begin{equation*}
    \frac{\mathrm{E}[\mu_j^{\tilde{a}(j)}] - \mathrm{E}[\mu_j^{\underline{a}(j)}]}{\mathrm{E}[\mu_j^{\hat{a}(j)}]- \mathrm{E}[\mu_j^{\underline{a}(j)}]}
    =
    \sqrt{
    \frac{\left(c \left(b + d + \frac{\sigma^2}{m}\right) + d (b+d)\right)^2\left(d + \frac{\sigma^2}{m}\right) }{\left(c \left(b + d + \frac{\sigma^2}{m}\right)^2 +(b+d)^2\left(d + \frac{\sigma^2}{m}\right)\right) \left( c\left(d + \frac{\sigma^2}{m}\right) + d^2 \right)}
    },
\end{equation*}
which is one for $b=0$ or $c=0$ (in which cases choices are the same) and smaller than one otherwise.

\end{document}